\newcommand{\Exp}{\mathbb{E}}
\newcommand{\R}{\mathbb{R}}
\newcommand{\mB}{\mathcal{B}}
\newcommand{\mC}{\mathcal{C}}
\newcommand{\mK}{\mathcal{K}}
\newcommand{\mN}{\mathcal{N}}
\newcommand{\mM}{\mathcal{M}}
\newcommand{\tuple}[1]{\langle #1 \rangle}
\newcommand{\norm}[1]{\Vert #1 \Vert}
\newcommand{\abs}[1]{\vert #1 \vert}
\newcommand{\red}[1]{\textcolor{red}{#1}}
\begin{document}
\title{Incentive-Compatible and Distributed Allocation for Robotic Service Provision Through Contract Theory
\thanks{This work has been submitted to the IROS 2024 for review.}
}
\titlerunning{Contract-Theoretic and Distributed Robot Allocation}
%
\author{Yuhan Zhao\inst{1} \and
Quanyan Zhu\inst{1}}
\authorrunning{Y. Zhao and Q. Zhu}
\institute{New York University, Brooklyn NY 11201. \\
\email{\{yhzhao, qz494\}@nyu.edu}}
\maketitle
\begin{abstract}
Robot allocation plays an essential role in facilitating robotic service provision across various domains. Yet the increasing number of users and the uncertainties regarding the users' true service requirements have posed challenges for the service provider in effectively allocating service robots to users to meet their needs.
In this work, we first propose a contract-based approach to enable incentive-compatible service selection so that the service provider can effectively reduce the user's service uncertainties for precise service provision. Then, we develop a distributed allocation algorithm that incorporates robot dynamics and collision avoidance to allocate service robots and address scalability concerns associated with increasing numbers of service robots and users.
We conduct simulations in eight scenarios to validate our approach. Comparative analysis against the robust allocation paradigm and two alternative uncertainty reduction strategies demonstrates that our approach achieves better allocation efficiency and accuracy\footnote{The simulation codes are available at \url{https://github.com/yuhan16/Contract-Allocation}.}.

\end{abstract}
\section{Introduction} \label{sec:intro}

Rapid technological advances have made robotic services prevalent in various fields, such as social robotics \cite{arduengo2021robot,Prassler2016}, manufacturing \cite{afrin2021resource}, and transportation \cite{kim2020autonomous,liu2019dynamic}. In robotic service provision, a service provider (SP) controls a group of versatile service robots and assigns them to fulfill users' service requests like status checks and device maintenance.
A typical example is on-demand cargo delivery in smart factories, where multiple robots are assigned to different areas to take care of the order transportation in the corresponding area. The fundamental technique for robotic service provision is robot allocation. The SP needs to allocate robots to different places or users to complete various service requests and achieve higher autonomy. Similar applications such as robot patrolling in urban areas \cite{huang2019survey,basilico2022recent}, mobile robot-based communication networks \cite{andre2014application,zeng2016wireless}, and robot inspection and maintenance in the infrastructure \cite{franko2020design} all reveal the importance of effective and efficient robot allocation.

However, emerging changes and challenges require additional considerations for designing effective robot allocation methods. First, the SP faces more nuanced services as the user service requirements become more detailed and the service robots more advanced. Unlike different exclusive services, these services are inclusive and vary across different levels. For example, a high-quality service (e.g., inspection and repair) robot can complete a low-level service (e.g., inspection only). Therefore, the SP needs to accurately allocate service robots to maximize resource utilization.
Second, uncertainties for the user can lead to inefficient robot allocation and resource mismatches. The uncertainty can arise from the user's selfish incentives to misreport service requests. For example, a user can receive more benefits (e.g., faster service completion) if he requests a higher level of service from the SP. Third, the SP can encounter scalability issues as the number of service robots and users increases. Besides, practical allocation solutions should also consider robot dynamics and collision avoidance.
Most classic methods in resource or task allocation are centralized and require full user information to make allocation plans. They typically produce direct user-robot assignment plans without detailed robot-level trajectories for allocation. Besides, they primarily focus on different task scenarios rather than different levels of services. Therefore, there is a need for distributed algorithms to address these concerns and provide robot-level allocation plans.

\begin{figure}[t]
    \centering
    \includegraphics[height=5cm]{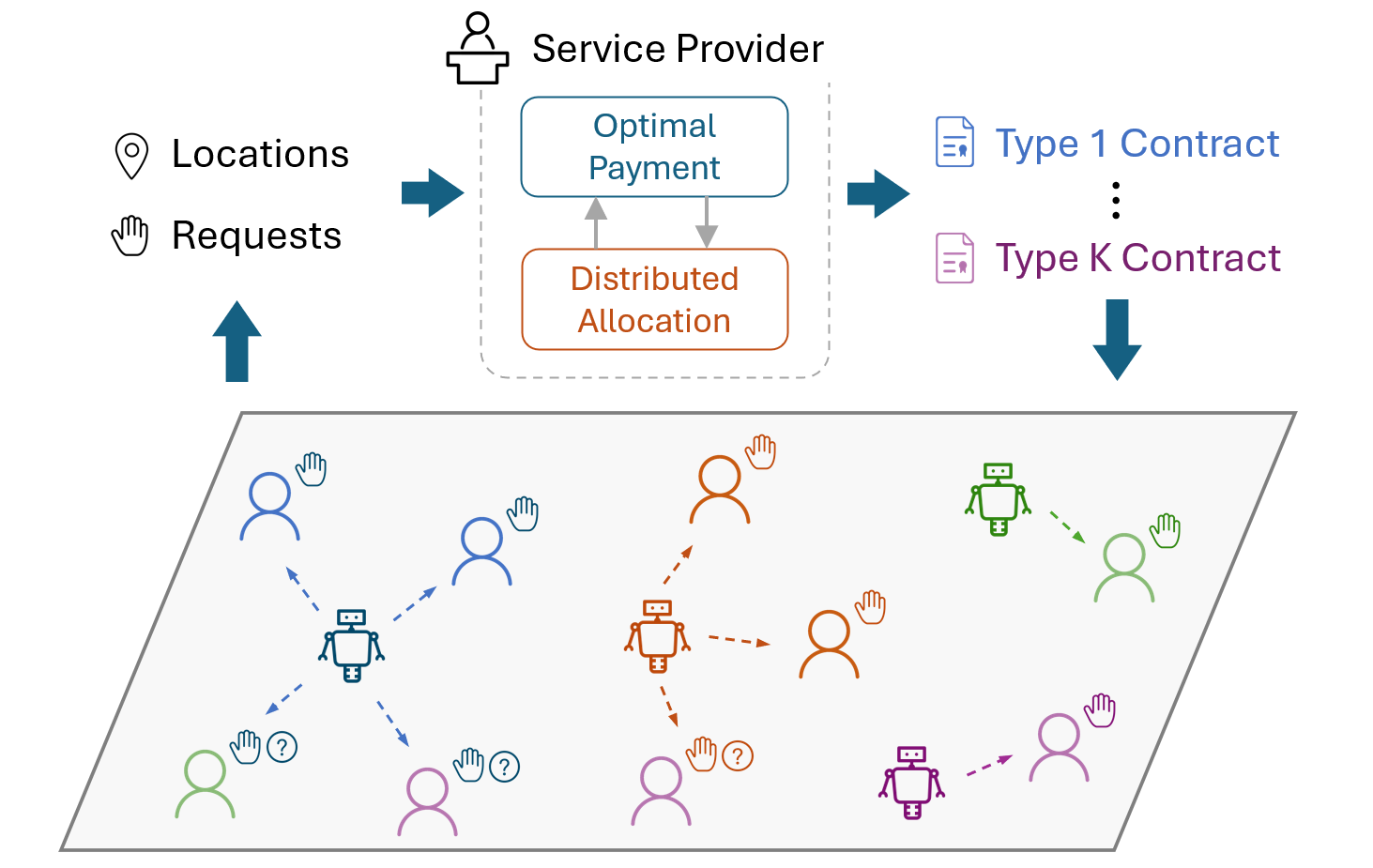}
    \caption{The service provider assigns various service robots to users based on their specific service requests. However, users may inaccurately report their needs, resulting in resource mismatches. We propose a contract-based approach, including optimal payment design and distributed allocation, to ensure accurate and efficient allocation of service robots to designated users.
    }
    \label{fig:intro}
\end{figure}

To this end, we propose a contract-based approach to reduce the SP's uncertainty regarding user needs and achieve distributed and efficient robot allocation for service provision, as demonstrated in Fig.~\ref{fig:intro}.
Rooted in game theory, contract theory studies the incentive-compatible mechanisms to allocate resources and mitigate the \emph{moral hazard} in scenarios of asymmetric information structure \cite{bolton2004contract,borgers2015introduction}. The contract designer designs a menu of contracts to incentivize the participants to truthfully select options that align with their needs, considering that the participants have alternative choices. It provides a suitable framework to characterize multi-level service misuses in our problem.
Specifically, the SP, lacking knowledge of a user's true service requirement, offers a range of incentive-compatible contracts. Each contract specifies a distinct level of service, and the user selects the contract that matches his true needs. Leveraging contract theory, the SP gains comprehensive user information and reduces the user's uncertainty.
Subsequently, we develop a distributed allocation algorithm to accurately assign suitable service robots to users while ensuring collision avoidance. 
We use simulations in eight testing scenarios to corroborate our approach. Results show that our approach successfully completes all the test cases and yields collision-free robot trajectories to achieve effective allocation. We also compare the robust allocation scheme and two alternative user uncertainty reduction methods and demonstrate that our approach consistently outperforms them in allocation efficiency and accuracy.

\emph{Notations:} We denote $\mN := \{1,2,\dots,N\}$, $\mM := \{1,\dots, M\}$, and $\mK:= \{1,\dots, K\}$. The Euclidean distance is denoted by $\norm{\cdot}$.

\section{Related Work} \label{sec:related_work}
Robot allocation is related to multi-robot task allocation (MRTA), which studies task assignment to different robots to achieve the overall task objective. Many works have thoroughly reviewed the advances and related applications in MRTA \cite{gerkey2004formal,dias2006market,khamis2015multi}. The methodologies in MRTA can be generally categorized into optimization-based and market-based approaches. Optimization-based approaches solve an optimal assignment problem to maximize some overall allocation utility. The assignment problems are general mixed integer programming (MIP), and the results imply which robot to assign to which task. Examples include \cite{notomista2021resilient,mahulea2017robot,mayya2021resilient}. 
Market-based approaches treat robots as individual agents to conduct task allocations. The auction is the most common mechanism in the market-based approaches, where every robot submits a bid for a task to the auctioneer (the planner) and the winner receives the task. Depending on the auction rules, the auction can be further divided into the first-price auction and the second-price auction, etc. Concrete applications can be found in package transportation \cite{bai2022group}, multi-robot routing \cite{bai2021distributed}, and cooperative tracking \cite{capitan2013decentralized}. 
However, most optimization-based methods are centralized and require full information on both tasks and robots to perform allocation. They also face challenges when solving large-scale MIP. The market-based methods, despite the game-theoretic formulation, require extra knowledge between bidders\footnote{The bidders are the users in our problem and they need service robots to fulfill the service need.} to make the bid for every task. Besides, due to the user's selfish incentives, users are all likely to ask for the highest quality of the service, making the allocation ineffective. Our approach aims to bridge the gap and provide an efficient allocation method to address the challenges.

When there are uncertainties in the allocation problem, robust task allocation provides a feasible solution. It analytically characterizes the unknown part of the planning model and provides robustness in allocation in the face of uncertainty. For example, Choi et al. in \cite{choi2009consensus} have investigated task allocation for multi-vehicle coordination under inconsistent situational awareness across the vehicles and variations
in the communication network topology.
A robust robot assignment method developed by Prorok in \cite{prorok2020robust} has addressed the problem of assigning mobile robots over a transportation network with uncertain travel time and minimizing the average waiting time at the destination.
Choudhury et al. in \cite{choudhury2022dynamic} have proposed a dynamic allocation algorithm that minimizes the number of unsuccessful tasks during the operation horizon under the task completion uncertainty.
Despite the extra robustness, robust task allocation sometimes sacrifices planning efficiency. Our method leverages the contract-based approach to reduce user uncertainty in service provision and, hence, achieve more efficient allocation than its robust counterparts. 

Contract design has rich literature in operations research \cite{zhao2010coordination,qin2020contract}, as well as engineering like IoT \cite{chen2023qos} and smart cities \cite{perera2014sensing}. A more comprehensive introduction to contract theory can be found in \cite{bolton2004contract}.
There are works that leverage contract theory for resource allocation although not for robotic services. For example, Zhou et al. in \cite{zhou2019computation} have adopted contract theory to offload the computation tasks of a vehicular network from the base station for efficient communication. Lim et al. in \cite{lim2021towards} have developed a contract-based matching algorithm to dispatch drones to different users to fulfill the user's computational demands. Chang et al. in \cite{chang2020incentive} has developed a contract-theoretic network resource allocation mechanism between a network operator and infrastructure providers to improve the network operator's payoff.
In this work, we focus on robotic allocation and distributed implementation based on contract theory.

\section{Problem Formulation} \label{sec:prob}

\subsection{Basic Settings} \label{sec:prob.setting}
We consider a service provider (SP, she) to have a group of $N$ robots that provide $K$ types of service. Each robot has a type $k$ and provides the type $k$ service to the user, $k \in \mK$. A higher type can represent a better service quality or more efficient service provision. For example, a type 2 robot can fix the user's problem faster than a type 1 robot. We denote the number of type $k$ robots by $N^k$, $k \in \mK$, and $\sum_{k \in \mK} N^k = N$. 
Here, we assume that a type $k$ robot can fulfill all the services below type $k$. For example, a type $2$ robot can provide type $1$ and type $2$ services to the user while a type $1$ robot can only complete type $1$ service.

We consider $M$ users in a working space $\mathcal{W}\subset \R^2$ with the position $q_i \in \R^2$, $i \in \mM$. Every user has a type $\theta_i \in \mK$ service to fulfill. However, due to selfish incentives, the user can request a higher type $\phi_i$ service to gain extra benefits. For example, a user with a type 1 service need can request a type 2 robot for more efficient service provision.
We use $\tuple{\theta_i, \phi_i}$ to denote the user $i$'s true service type demand and the requested type. The SP does not know the user's true service type except for a type distribution $p_i \in \Delta(K), i \in \mM$. She can only assign the robot to the user based on the requested type $\phi_i$, which may cause resource waste. Therefore, we leverage contract theory to enable the SP to provide users with proper service robots that are compatible with their true service types.

A contract contains an allocation rule and a payment rule. The allocation rule determines which service robot to assign to the user and where to place the robot; the payment rule specifies the service price for the user to use the robots.
We capture the allocation rule by $\tuple{b^k_{ij}, x^k_{j}}, i \in \mM, j \in \mN^k, k \in \mK$, where $b^k_{ij} = \{0,1\}$ indicates if the type $k$ service robot $j$ robot is assigned to the user $i$; $x^k_{j} \in \R^2$ is the robot's location. The payment rule $\rho^k_{ij}$ is the service price for the user $i$ to use the type $k$ service robot $j$. 
For simplicity, we denote $\bm{b}^k := \{\{b^k_{ij}\}_{j=1}^{N^k}\}_{i=1}^M$, $\bm{x}^k := \{x^k_{j}\}_{j=1}^{N^k}$, and $\bm{\rho}^k := \{\{\rho^k_{ij}\}_{j=1}^{N^k}\}_{i=1}^M$. We also write $\bm{b} := \{\bm{b}^k\}_{k=1}^K$, $\bm{x} := \{\bm{x}^k\}_{k=1}^K$, and $\bm{\rho} := \{\bm{\rho}^k\}_{k=1}^K$. Therefore, the SP's contract can be denoted by $\mC := \tuple{\bm{b}, \bm{x}, \bm{\rho}}$. 

\begin{remark}
A service robot can be assigned to multiple users due to the large number of users. Then, the robot needs to consider the order of service provision if it only serves one user at a time. However, the ordering problem can be regarded as independent of contract design. It is sufficient for the SP to determine robot positions $\bm{x}$ for fast service provision in the contract. Once $\bm{x}$ is found, we can leverage other methodologies, such as traveling salesman problems (TSP) to address the order of service provision.
\end{remark}

\subsection{User Model}
Users request their service demand based on the contract $\mC$. Specifically, if user $i$ requests a type $\phi_i$ service to the SP, a service robot $j$ with type $\phi_i$ located at $x^{\phi_i}_{j}$ will be assigned to the user $i$ with the price $\rho^{\phi_i}_{ij}$. We define the user $i$'s utility function by 
\begin{equation*}
    u_i(\phi_i) = b^{\phi_i}_{ij}\left( g(\phi_i - \theta_i) r - \rho^{\phi_i}_{ij} \right).
\end{equation*}
Here, $\theta_i$ is the user's true service type; $r$ is a parameter that measures the user $i$'s gain of receiving the service. However, a user can receive more gains by misreporting his true type. We capture this extra benefit by the function $g: \R\to\R$ with $g(x)$ increasing and concave in $x \geq 0$ and $g(0) = 1$. For example, user $i$ with type 1 receives a service gain $r$ if he truthfully reports $\phi_i = 1$; he can receive a higher gain $g(1) r > r$ if he misreports $\phi_i = 2$. We also define that $g(x) = 0$ for $x < 0$ since a high-type service cannot be completed by a low-type robot, resulting in a zero gain.

\subsection{Service Provider Model}
The SP designs the contract $\mC$ to incentivize users to truthfully report their service types and allocate service robots. In the allocation, each user is assigned one robot with a specific type, which is captured by the following allocation constraints
\begin{equation}
\label{eq:alloc}
    \sum_{j=1}^{N^k} b^k_{ij} = 1, \ b^k_{ij} = \{0,1\}, \quad \forall i \in \mM, \ k \in \mK.
\end{equation}
In addition, the SP needs Incentive Compatibility (IC) constraints and Individual Rationality (IR) constraints to ensure that the resources are not misused.

IR constraints ensure that all users have an incentive to take the contract, meaning that users should have a positive utility when requesting the true service using the contract $\mC$:
\begin{equation}
\label{eq:ir}
    b^k_{ij} \left( r - \rho^k_{ij} \right) \geq 0, \quad \forall i \in \mM, \ j \in \mN^k, \ k \in \mK.
\end{equation}
The constraints hold trivially when $b^k_{ij} = 0$.

IC constraints ensure that users truthfully report their service types, preventing them from misusing service resources to get extra benefits. IC states that a user's utility is maximized if and only if he requests a service type that matches his true type. With the allocation constraints \eqref{eq:alloc}, we write the IC constraints as two parts:
\begin{equation}
\label{eq:ic1}
\begin{split}
    \sum_{j=1}^{N^k} b^k_{ij} \left( r-\rho^k_{ij} \right) \geq& \sum_{j=1}^{N^l} b^l_{ij} \left( 0- \rho^l_{ij} \right), \\ 
    \forall l = 1,&\dots, k-1, \ k = 2,\dots, K, \ i \in \mM. 
\end{split}
\end{equation}
\begin{equation}
\label{eq:ic2}
\begin{split}
     \sum_{j=1}^{N^k} b^k_{ij} \left( r-\rho^k_{ij} \right) &\geq \sum_{j=1}^{N^l} b^l_{ij} \left( g(l-k) r - \rho^l_{ij} \right), \\ 
     \forall l = k+1, &\dots, K, \ k = 1,\dots, K-1, \ i \in \mM. 
\end{split}
\end{equation}
The first part \eqref{eq:ic1} (and the second part \eqref{eq:ic2}) captures the cases where the user selects a lower (higher) service type than his true one.

Also, the payment in $\mC$ should be non-negative so that the SP can make a profit:
\begin{equation}
\label{eq:payment}
    \rho^k_{ij} \geq 0, \quad \forall i \in \mM, \ j \in \mN^k, \ k \in \mK. 
\end{equation}

The SP's objective of designing a contract is twofold: maximizing the net revenue and the allocation efficiency. The revenue comes from the payment, which is captured by
\begin{equation}
\label{eq:revenue}
    \Exp_p [R(\bm{b}, \bm{\rho})] = \sum_{i=1}^M \sum_{k=1}^K \sum_{j=1}^{N^k} p^k_i b^k_{ij} \rho^k_{ij}.
\end{equation}
The allocation efficiency is measured by the locational energy, which reflects the quality of robot deployment. We define the expected locational energy as  
\begin{equation}
\label{eq:location_energy}
    \Exp_p [L(\bm{x},\bm{b})] = \sum_{i=1}^M \sum_{k=1}^K \sum_{j=1}^{N^k} p^k_i b^k_{ij} f \left( \norm{q_i - x^k_{j}} \right),
\end{equation}
where $f:\R \to \R$ is some positive and increasing function with $f(0) = f'(0) = 0$. A smaller $L(\bm{x},\bm{b})$ implies a better allocation efficiency,
which can correspond to a shorter routing time for robots to visit all appointed users from the positions $\bm{x}$.
Let $\gamma > 0$ be the weighting parameter. The SP's contract design problem is formulated as follows:
\begin{equation}
\label{eq:contract}
\begin{split}
    \max_{\bm{x}, \bm{b},\bm{\rho}} \quad & \Exp_p [R(\bm{b}, \bm{\rho})] - \gamma \Exp_p [L(\bm{x},\bm{b})] \\ 
    \text{s.t.} \quad & \eqref{eq:alloc}-\eqref{eq:payment}.
\end{split}
\end{equation}

\section{Contract Analysis and Distributed Implementation} \label{sec:analysis}
The contract problem \eqref{eq:contract} is a mixed integer programming (MIP) and is challenging to solve directly, especially when $f$ is nonlinear. To develop efficient algorithms to obtain the contract, we split \eqref{eq:contract} into two sub-problems and analyze them sequentially.

\subsection{Payment Sub-Problem}
The payment sub-problem is obtained by fixing the allocation variables $\tuple{\bm{b}, \bm{x}}$, which gives
\begin{equation}
\label{eq:contract.pay}
\begin{split}
    \max_{\bm{\rho}} \quad & \sum_{i=1}^M \sum_{k=1}^K \sum_{j=1}^{N^k} p^k_i b^k_{ij} \rho^k_{ij} \\ 
    \text{s.t.} \quad & \eqref{eq:ir}-\eqref{eq:payment}.
\end{split}
\end{equation}
The following proposition characterizes the optimal solution of the payment problem \eqref{eq:contract.pay} given the allocation variables.

\begin{proposition} \label{prop:1}
Assume $g(1) < \frac{K}{K-1}$. Given the allocation variable $\tuple{\bm{b}, \bm{x}}$, the optimal payment that corresponds to $b^k_{ij} = 1$ is given by
\begin{equation*}
    \rho^{k*}_{ij} = (K-k+1)r - (K-k)g(1)r, \quad i \in \mM, \ k \in \mK.
\end{equation*}
The payment $\rho^k_{ij}$ that corresponds to $b^k_{ij} = 0$ is trivially zero since no robot is assigned to the user.
\end{proposition}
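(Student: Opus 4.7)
The plan is to exploit the fact that, once the allocation $\langle\bm{b},\bm{x}\rangle$ is fixed, both the objective \eqref{eq:contract.pay} and each of the constraints \eqref{eq:ir}--\eqref{eq:payment} decouple across users and, for each user, involve only the $K$ payments corresponding to the assigned robots. Concretely, for every $i\in\mM$ and $k\in\mK$, the allocation constraint \eqref{eq:alloc} designates a unique $j^\star_k(i)$ with $b^k_{ij^\star_k(i)}=1$; let $\rho^k_i$ denote the corresponding payment. The payments with $b^k_{ij}=0$ do not enter the objective and trivially satisfy \eqref{eq:ir}--\eqref{eq:ic2}, so we may set them to $0$. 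The problem \eqref{eq:contract.pay} then reduces, for each $i$, to maximizing $\sum_{k=1}^K p^k_i\rho^k_i$ subject to $0\le\rho^k_i\le r$ (IR and non-negativity), $\rho^l_i\ge\rho^k_i-r$ for $l<k$ (from \eqref{eq:ic1}), and $\rho^l_i\ge\rho^k_i+(g(l-k)-1)r$ for $l>k$ (from \eqref{eq:ic2}).

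Next I would derive upper bounds on each $\rho^k_i$ and show they are tight at the proposed solution. Since $g(1)>g(0)=1$, the adjacent downward IC constraints from \eqref{eq:ic2} (taking $l=k+1$) give $\rho^k_i\le\rho^{k+1}_i-(g(1)-1)r$. Iterating upward and using IR $\rho^K_i\le r$ yields $\rho^k_i\le r-(K-k)(g(1)-1)r=(K-k+1)r-(K-k)g(1)r$, which is precisely the claimed value $\rho^{k*}_{ij}$. Because the objective coefficients $p^k_i$ are non-negative, an optimal payment schedule maximizes each $\rho^k_i$ within the feasible region, so it suffices to verify that assigning $\rho^k_i=\rho^{k*}_{ij}$ for every $k$ simultaneously satisfies all constraints.

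Feasibility is the only nontrivial step. The IR constraint holds since $\rho^{k*}_i\le r$. The IC1 constraints $\rho^l_i\ge\rho^k_i-r$ for $l<k$ reduce to $(k-l)(g(1)-1)\ge -1$, which holds since the left side is non-negative. The tricky case is IC2 with $l-k\ge 2$: one must show $\rho^l_i-\rho^k_i=(l-k)(g(1)-1)r\ge(g(l-k)-1)r$, i.e., $g(l-k)-g(0)\le(l-k)\bigl(g(1)-g(0)\bigr)$. This is exactly the secant inequality implied by concavity of $g$ on $[0,l-k]$, combined with $g(0)=1$. Finally, the non-negativity constraint \eqref{eq:payment} amounts to $\rho^{1*}_i=r\bigl(1-(K-1)(g(1)-1)\bigr)\ge 0$, which is equivalent to the hypothesis $g(1)<\tfrac{K}{K-1}$ (with equality excluded in the strict statement of the proposition but giving a non-negative payment either way).

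The main obstacle is the simultaneous feasibility check for non-adjacent IC2 constraints, since naively chaining $l=k+1$ inequalities only bounds adjacent pairs; the argument crucially invokes concavity of $g$ to transform a sum of adjacent marginal gaps $(g(1)-1)$ into an upper bound on the full gap $g(l-k)-1$. Once this secant estimate is in place, each constraint is verified in closed form and optimality follows from tightness of the IR/IC2 chain used to derive the upper bound.
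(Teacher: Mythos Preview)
Your proposal is correct and follows essentially the same approach as the paper: reduce to the $K$ payments corresponding to the assigned robots, use the linear objective with non-negative weights to argue each $\rho^k$ should be pushed to its upper bound, derive that bound from the IR constraint at level $K$ combined with the adjacent IC2 inequalities, and invoke concavity of $g$ together with the hypothesis $g(1)<\tfrac{K}{K-1}$ for feasibility. The only cosmetic difference is that the paper organizes the argument as a top-down induction (setting $\rho^{K*}$, then $\rho^{K-1*}$, etc., checking at each step via concavity that the newly obtained bound is the binding one), whereas you chain the adjacent constraints in one shot and then verify all IC2 constraints, including the non-adjacent ones, afterward; the mathematical content is the same.
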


\begin{proof}
See Appendix \ref{app:1}.
\end{proof}

\paragraph{Interpretation of Prop.~\ref{prop:1}}
From Prop.~\ref{prop:1}, the SP charges $r$ for the type $K$ service, and the service price decreases as the type drops. This implies that users with a true service type lower than $K$ will always receive a positive utility from the contract. This is because the payment is used to incentivize the user to request the true service type. Type $K$ users have no choice but to select type $K$ robot. Otherwise, they will receive a negative reward. Therefore, the SP can benefit the most by setting a price equal to $r$ for the type $K$ users. 
However, the SP has to reduce the price to attract the users less than type $K$ to truthfully report their true service types. Otherwise, the users will always request higher type services to benefit more.

Prop.~\ref{prop:1} also implies that the optimal payment $\bm{\rho}^*$ does not distinguish the robots with the same service type. Any type $k$ robot among all $N^k$ robots assigned to a user has the same service price. This is because the payment is used to make users truthfully report their types and avoid resource waste. Then, the SP finds $\tuple{\bm{b}, \bm{x}}$ to maximize allocation efficiency.

\subsection{Allocation Sub-Problem}
The allocation sub-problem aims to find proper positions to deploy different service robots for users, which is
\begin{equation}
\label{eq:contract.alloc}
\begin{split}
    \min_{\bm{b}, \bm{x}} \quad & \sum_{i=1}^M \sum_{k=1}^K \sum_{j=1}^{N^k} p^k_i b^k_{ij} f \left( \norm{q_i - x^k_{j}} \right) \\ 
    \text{s.t.} \quad & \eqref{eq:alloc}.
\end{split}
\end{equation}
However, it is inefficient to solve \eqref{eq:contract.alloc} directly due to the following reasons. First, the nonlinearity of $f$ raises the difficulty of solving the MIP. Second, the sub-problem \eqref{eq:contract.alloc} is in a centralized form and faces scalability issues as the number of users $M$ and robots $N$ increase. It also fails to consider safety criteria like collision avoidance between robots. Furthermore, the sub-problem \eqref{eq:contract.alloc} represents a \emph{robust allocation} which only finds optimal allocation plans in the average sense. It does not take advantage of the result from the payment sub-problem to reduce user uncertainties.
Therefore, we develop a distributed allocation algorithm to complete contract design.

\subsection{Contract with Distributed Allocation}
We note that the SP can leverage the optimal payment rule to make users truthfully report their service types and reduce the problem uncertainty. In other words, the SP can replace $p^k_i$ with either 0 or 1 and count the number $M^k$ of total type $k$ users ($\sum_{k \in \mK} M^k = M$). Then, the allocation sub-problem of each type becomes independent. We write the allocation sub-problem for type $k$ as 
\begin{equation}
\label{eq:contract.alloc.k}
\begin{split}
    \min_{\bm{b}^k, \bm{x}^k} \quad & L^k(\bm{b}^k, \bm{x}^k) := \sum_{i=1}^{M^k} \sum_{j=1}^{N^k} b^k_{ij} f\left( \norm{q_i - x^k_{j}} \right) \\ 
    \text{s.t.} \quad & \sum_{j=1}^{N^k} b^k_{ij} = 1, \quad \forall i \in \mM.
\end{split}
\end{equation}

To alleviate the problem stability and incorporate safety considerations, we develop a distributed allocation algorithm to solve \eqref{eq:contract.alloc.k} for each $k \in \mK$.
We consider the single integrator dynamical model for every robot, i.e., $x^k_{j,t+1} = x^k_{j,t} + u^k_{j,t}$, where the additional subscript $t$ denotes the time and $u^k_{j,t}$ is the control input for the robot $j$ with type $k$. 
At time $t$, all $N^k$ type $k$ robots communicate with the SP about their current locations $\bm{x}^k_{t}$ and receive the assignment $b^k_{ij,t}$ and the corresponding user location from the SP. Then, each robot uses the control to move to the new position. We use the negative gradient of the locational energy to design control for each robot, i.e., $u \sim -\nabla_x \sum_{i \in \mM} b^k_{ij} f\left( \norm{q_i - x} \right)$, which can be independently computed by robots. 

\begin{remark}
In the communication with the SP, it is sufficient for each robot to know the assigned users to compute control. If $b^k_{ij} = 0$ for some user $i$, the SP need not tell the robot $j$ anything about the user $i$. Here, for compactness, we write $\sum_{i \in \mM} b^k_{ij} f\left( \norm{q_i - x} \right)$ for each robot.
\end{remark}

We set a safety region with a radius $r_{\text{safe}}$ to perform collision avoidance. The barrier function 
\begin{equation*}
    \Phi^k_j(x) = \begin{cases}
        \sum_{l \in \mB^k_j} -\beta \log \left( \frac{\norm{x - x_l}}{r_{\text{safe}}}  \right) & \norm{x - x_l} < r_{\text{safe}} \\ 
        0 & \norm{x - x_l} \geq r_{\text{safe}}
    \end{cases}
\end{equation*}
is used to measure the safety condition, where $\beta > 0$ is the weighting parameter and $\mB^k_j$ is the set of the neighboring robots in the safety region of the type $k$ robot $j$. Note that $\mB^k_j$ can contain robots other than type $k$.

Finally, we summarize the incentive-compatible contract design with distributed allocation in Alg.~\ref{alg:1}.

\begin{algorithm}
\KwInput{User positions $\bm{q}$, true type $\bm{\theta}$, robots initial positions $\bm{x}^k_0$ for all $k$;}
SP designs the payment rule $\bm{\rho}^*$ by solving \eqref{eq:contract.pay} and sends it to users \;
Users report service type $\bm{\phi}$ based on $\bm{\rho}^*$ \;
SP determines the user's type\;
\For{each type $k = 1,\dots, K$ in parallel}{
    $t \gets 0$ \;
    Locational energy $L^k_t \gets \infty$ \;
    \While{true}{
        All robots report $\bm{x}^k_t$ to the SP \;
        SP computes locational energy $L^k_t$ \;
        SP computes the allocation plan $\bm{b}^k_t$ and sends it to robots \;
        \For{each robot $j=1,\dots, N^k$ in parallel}{
            Detect neighbor set $\mB_j$ in the safety region \;
            $u_1 = -\alpha \nabla_x \sum_{i \in \mM} b^k_{ij,t} f(\norm{q_i - x^k_{j,t}})$ \;
            $u_2 = -\nabla_x \Phi^k_j(x)\vert_{x = x^k_{j,t}}$ \;
            $u^k_{j,t} = \begin{cases}
                u_1 + u_2 & \norm{u_1+u_2} \leq 1 \\
                \frac{u_1+u_2}{\norm{u_1 + u_2}} & \norm{u_1+u_2} > 1
            \end{cases}$ \;
            $x^k_{j,t+1} \gets x^k_{j,t} + u^k_{j,t}, \ \forall j \in \mN^k$ \;
        }
        \uIf{$\abs{L^k_t - L^k_{t-1}} < \epsilon$ \KwOr $t>t_{\max}$}{
            \KwBreak
        }
        Append trajectories $x^k_{j,t}$, $j \in \mN^k$\;
        $t \gets t+1$ \;
    }
}
\caption{Incentive-compatible Contract with Distributed Allocation}
\label{alg:1}
\end{algorithm}

The distributed allocation reduces the computational burden of the SP in implementing the contract. Compared with \eqref{eq:contract.alloc}, the SP only decides user assignment based on the reported robot positions, which is the $\arg\min$ operation over a $M^k$-dimensional vector. Besides, the user assignment for different types can also be executed in parallel. 
The choice of control provides a gradient descent flow in the locational energy $L^k$. Thus, robots will move to the positions and serve the users that correspond to a local optimal solution of \eqref{eq:contract.alloc.k}. The following proposition ensures the convergence of Alg.~\ref{alg:1}.

\begin{proposition} \label{prop:2}
Let the robot control be the gradient $u_1$ in Alg.~\ref{alg:1}. Then, Alg.~\ref{alg:1} converges to some local optimal solution of \eqref{eq:contract.alloc.k}.
\end{proposition}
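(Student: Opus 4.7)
The plan is to view Algorithm~\ref{alg:1}, restricted to $u = u_1$, as a block-coordinate descent scheme on $L^k(\bm{b}^k, \bm{x}^k)$ that alternates between an exact minimization in $\bm{b}^k$ and a gradient step in $\bm{x}^k$. The convergence argument then combines monotone descent with the natural lower bound $L^k \ge 0$, and a separate finite-set argument promotes convergence of the objective to convergence of the iterates to a local minimizer of \eqref{eq:contract.alloc.k}.

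First I would formalize one outer iteration as two substeps. (a) Given $\bm{x}^k_t$, the SP's assignment $\bm{b}^k_t$ solves $\min_{\bm{b}^k} L^k(\bm{b}^k, \bm{x}^k_t)$ exactly: the constraint $\sum_j b^k_{ij} = 1$ with $b^k_{ij} \in \{0,1\}$ decouples across users, and the optimum is $b^k_{ij,t} = 1$ iff $j \in \arg\min_{j'} f(\norm{q_i - x^k_{j',t}})$, which is precisely the SP's allocation step in Alg.~\ref{alg:1}. (b) Given $\bm{b}^k_t$, each robot moves along $u_1 = -\alpha \nabla_{x^k_j} L^k(\bm{b}^k_t, \bm{x}^k_t)$, possibly rescaled by $1/\norm{u_1}$ when $\norm{u_1} > 1$. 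Under the smoothness of $f$ and a step-size $\alpha$ within the descent regime of $L^k(\bm{b}^k_t,\cdot)$, this gradient step yields
\[
L^k(\bm{b}^k_t, \bm{x}^k_{t+1}) \le L^k(\bm{b}^k_t, \bm{x}^k_t),
\]
with strict inequality whenever the gradient is nonzero.

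Chaining the two substeps gives the key monotonicity
\[
L^k(\bm{b}^k_{t+1}, \bm{x}^k_{t+1}) \le L^k(\bm{b}^k_t, \bm{x}^k_{t+1}) \le L^k(\bm{b}^k_t, \bm{x}^k_t),
\]
where the first inequality uses optimality of the assignment substep at time $t+1$. Since $f \ge 0$ gives $L^k \ge 0$, the sequence $\{L^k_t\}$ is nonincreasing and bounded below, hence convergent. To upgrade convergence of the objective to convergence of $(\bm{b}^k_t, \bm{x}^k_t)$ toward a local minimizer, I would exploit that $\bm{b}^k$ lives in a finite combinatorial set: strict monotonicity on assignment changes plus finiteness forces $\bm{b}^k_t$ to stabilize to some $\bm{b}^{k\star}$ after finitely many outer iterations, after which the recursion reduces to pure gradient descent on the smooth map $\bm{x}^k \mapsto L^k(\bm{b}^{k\star}, \bm{x}^k)$, whose iterates converge to a stationary point $\bm{x}^{k\star}$ by standard results. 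At $(\bm{b}^{k\star}, \bm{x}^{k\star})$ the gradient in $\bm{x}^k$ vanishes and $\bm{b}^{k\star}$ is optimal against $\bm{x}^{k\star}$, which are precisely the local-optimality conditions for \eqref{eq:contract.alloc.k}.

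The main obstacle is bridging the discrete variable $\bm{b}^k$ with the continuous $\bm{x}^k$: the joint objective is not jointly continuous in $(\bm{b}^k, \bm{x}^k)$, so off-the-shelf smooth-optimization convergence theorems do not apply directly. Finiteness of the allocation set is the key lever that turns the eventual behavior into a pure continuous gradient method. A secondary technicality is justifying the step-size condition implicit in the normalization clause of Alg.~\ref{alg:1}, which I would handle either by taking $\alpha$ small enough to lie in the Lipschitz-gradient descent regime of $L^k(\bm{b}^k_t,\cdot)$ or by interpreting the normalization as an Armijo-type trust-region cap that preserves descent.
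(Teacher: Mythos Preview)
Your proposal is correct and follows essentially the same approach as the paper: both treat one outer iteration as exact assignment minimization (the Voronoi partition) followed by a gradient step in the positions, then chain $L^k(\bm{b}^k_{t+1},\bm{x}^k_{t+1}) \le L^k(\bm{b}^k_t,\bm{x}^k_{t+1}) \le L^k(\bm{b}^k_t,\bm{x}^k_t)$ and use $L^k\ge 0$ to conclude. The only framing difference is that the paper passes to the reduced function $L_b(\bm{x}) = \sum_i \min_j f(\norm{q_i-x_j})$ and invokes the centroidal Voronoi tessellation result of Du et al.\ to identify the robots' control with $-\nabla L_b$ and to name the limit points explicitly as Voronoi centroids, but the underlying descent argument is the same as yours.
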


\begin{proof}
See Appendix \ref{app:2}.
\end{proof}

When all robots reach the target destination, the contract is completed. Then, the robots begin to provide service to the user. If multiple users are assigned to one robot, the robot can solve a routing problem ( e.g., TSP) to visit the user by turns, which is beyond the scope of this work.

\section{Simulations and Evaluations} \label{sec:experiment}
\subsection{Simulation Settings} \label{sec:experiment.setting}
We choose a $10\times 10$ working space for simulation and experiment with eight scenarios with different numbers of users and robots shown in Tab.~\ref{tab:exp_spec}. We randomly generate user and robot information in each scenario, including their positions and type distributions. The user's service type is sampled from the corresponding type distributions.
We use $r=10$ to represent the user's service gain and set $g(x) = \frac{1}{2\log(K)} \log(x+1) + 1$ as the user's extra gain function. The locational energy function is chosen by $f(x) = x^2$. The gradient step $\alpha$ and the weighting parameter $\beta$ are set by $0.1$ and $10$, respectively.

\begin{table}[]
    \centering
    \begin{tabular}{c | >{\centering\arraybackslash}p{1cm} | >{\centering\arraybackslash}p{1cm} | >{\centering\arraybackslash}p{1cm} |c|c} \hline
        Scenario & $M$ & $N$ & $K$ & User type \# & Robot type \# \\ \hline
        \multirow{2}{*}{1-2} &\multirow{2}{*}{20} & 9 & \multirow{2}{*}{3} & \multirow{2}{*}{(5,11,4)} & (3,4,2) \\ 
         &  & 12 &  & & (4,6,2) \\ \hline
        \multirow{2}{*}{3-4} &\multirow{2}{*}{30} & 12 & 3 & (12,11,7) & (4,6,2) \\ 
         & & 15 & 4 & (7,9,10,4) & (3,5,4,3) \\ \hline 
        \multirow{2}{*}{5-6} &\multirow{2}{*}{50} & 15 & \multirow{2}{*}{4} & \multirow{2}{*}{(15,14,11,10)} & (3,5,4,3) \\ 
         & & 40 & & & (12,8,13,7) \\ \hline
        \multirow{2}{*}{7-8} &\multirow{2}{*}{100} & 40 & 4 & (30,25,28,17) & (12,8,13,7) \\ 
         & & 40 & 5 & (18,22,33,20,7) & (8,10,10,9,3) \\ \hline
    \end{tabular}
    \vspace{2mm}
    \caption{Specifications for eight simulation scenarios.}
    \label{tab:exp_spec}
\end{table}

\subsection{Simulation Results of Contract-based Allocation} \label{sec:experiment.contract}

To validate our approach, we run $50$ different simulations with randomly generated user/robot information for every scenario. To make the performance comparable across different scenarios, we use the same initial settings for the users and the robots that have the same type numbers. For example, in Scenario 1-2, the users have the type numbers $(5,11,4)$, and we generate the same user positions and type distributions in these two scenarios. Similarly, in Scenario 5-6, users share the same initial settings; in Scenario 2-3, Scenario 4-5, and Scenario 6-7, robots share the same initial settings. Note that the user/robot information differs in $50$ simulation cases in one scenario; the simulation cases with the same index across the two scenarios have the same settings. Then, we summarize the results with mean and standard deviation in Tab.~\ref{tab:result}.

As we observe, the optimal payment $\bm{\rho}^*$ does not differentiate users and remains the same once the user types are fixed. The SP always charges $r$ to the users who need the highest service type to maximize the profit. The service price decreases for the users with a lower service type. Besides, the convergence of our algorithm can be directly observed by the step column. 

When fixing the users, including their total numbers and types, we see that more service robots lead to a more efficient allocation (faster convergence and smaller locational energy). For example, Scenario 2 has a smaller mean step and mean locational energy than Scenario 1; the same applies to Scenario 5 and 6. This is because more robots are used for service assignment, and a robot needs to serve fewer users on average. Therefore, robots can select the allocation positions to reduce the distance between the served users, leading to more efficient allocation plans.

When fixing the robots, we observe that more users result in a less efficient allocation with longer convergent steps and higher locational energy. The comparisons between Scenarios 2 and 3, Scenarios 4 and 5, and Scenarios 6 and 7 illustrate this point. This is because a robot has more users to serve on average and needs more routing time to visit all robots. Besides, increasing the number of users also leads to more possible user positions, some of which may either facilitate or worsen the robot allocation that starts from the same initial condition. It results in a larger deviation in the locational energy, as we can observe in the comparisons.

\begin{table}[]
    \centering
    \begin{tabular}{c|c|>{\centering\arraybackslash}p{2.5cm}|>{\centering\arraybackslash}p{2.5cm}} \hline
        Scenario & $\bm{\rho}^*$ (follow user type \#) & Step & Energy \\ \hline
        \multirow{2}{*}{1-2} & \multirow{2}{*}{(5.0,7.5,10.0)} & 55.82(14.16) & 72.22(23.76) \\ 
         & & 48.48(12.55) & 53.53(19.41) \\ \hline
        \multirow{2}{*}{3-4} & (5.0,7.5,10.0) & 54.58(15.83) & 108.06(26.15) \\ 
         & (3.54,5.69,7.85,10.0) & 53.74(12.33) & 89.28(18.67) \\ \hline 
        \multirow{2}{*}{5-6} & \multirow{2}{*}{(3.54,5.69,7.85,10.0)} & 55.42(11.76) & 182.24(34.20) \\ 
         & & 45.24(12.26) & 55.22(13.26) \\ \hline
        \multirow{2}{*}{7-8} & (3.54,5.69,7.85,10.0) & 49.80(12.18) & 131.54(19.87) \\ 
         & (2.26,4.20,6.13,8.07,10.0) & 49.74(12.12) & 166.00(28.94) \\ \hline
    \end{tabular}
    \vspace{2mm}
    \caption{Simulation results of contract-based allocation. The mean and variance are obtained per 50 simulations. $\rho^*$ denotes the optimal payment rule. The total steps for allocation and the final locational energy after the allocation are presented accordingly.}
    \label{tab:result}
\end{table}

To visualize distributed allocation, we show the robot trajectories of a specific allocation simulation in Fig.~\ref{fig:alloc_trajectory} using the setting in Scenario 8. We plot the trajectories by service types for clarity due to the large number of users. After using the payment rule to identify the user types, all robots successfully go to the allocation positions to serve the assigned users. Some zigzag trajectories, e.g., trajectories in Fig.~\ref{fig:type.4}, are due to collision avoidance. The dashed line indicates which users are assigned to the corresponding robots after the distributed allocation. Depending on their random initial positions, some robots may serve one user (e.g., Fig.~\ref{fig:type.4}) while some robots serve multiple users (e.g., Fig.~\ref{fig:type.5}). No collision is detected when all robots move during the allocation.

\begin{figure*}
    \captionsetup[subfigure]{justification=centering}
    \begin{subfigure}[b]{0.19\textwidth}
        \centering
        \includegraphics[height=2.5cm]{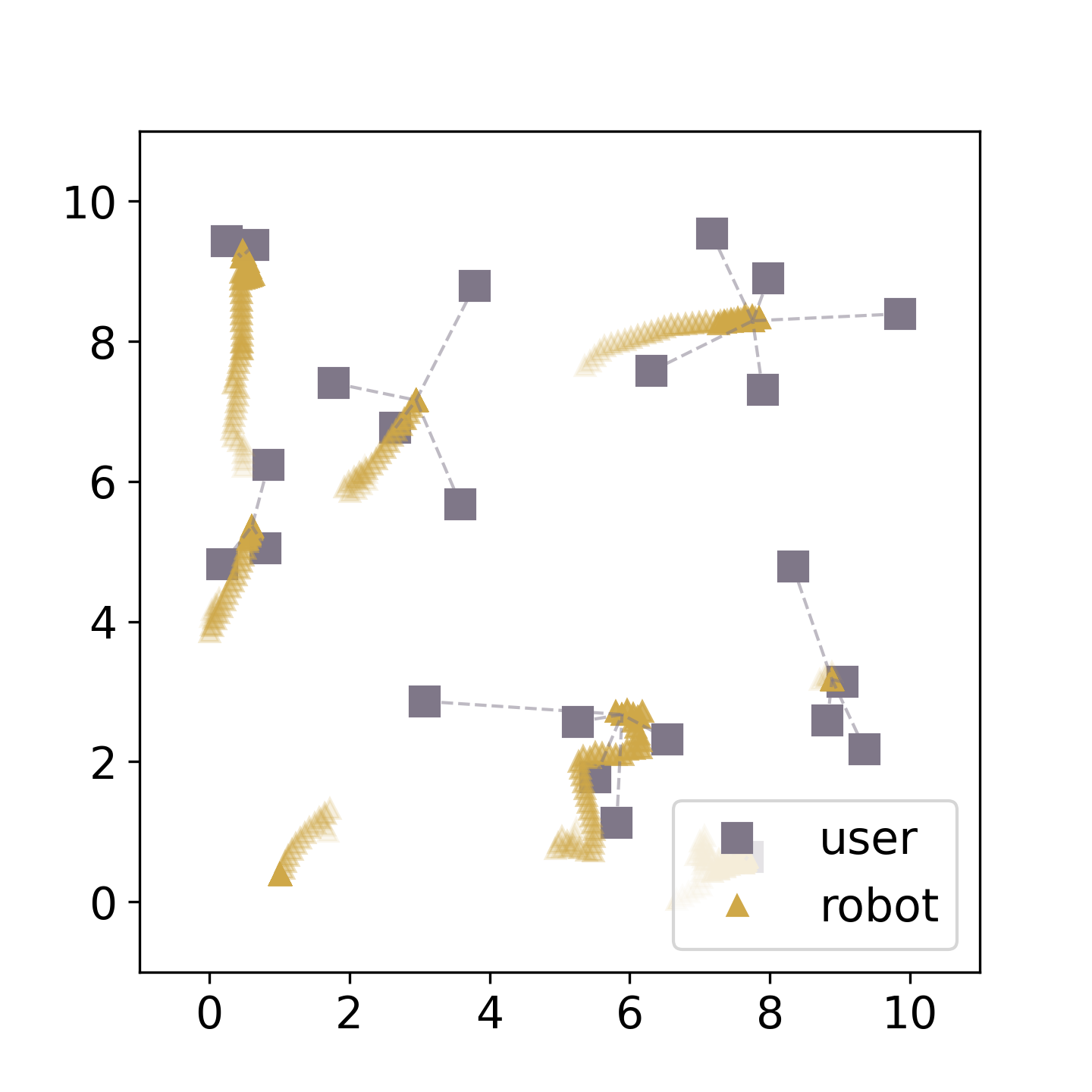}
        \caption{Type 1 allocation.}
        \label{fig:type.1}
    \end{subfigure}
    \captionsetup[subfigure]{justification=centering}
    \begin{subfigure}[b]{0.19\textwidth}
        \centering
        \includegraphics[height=2.5cm]{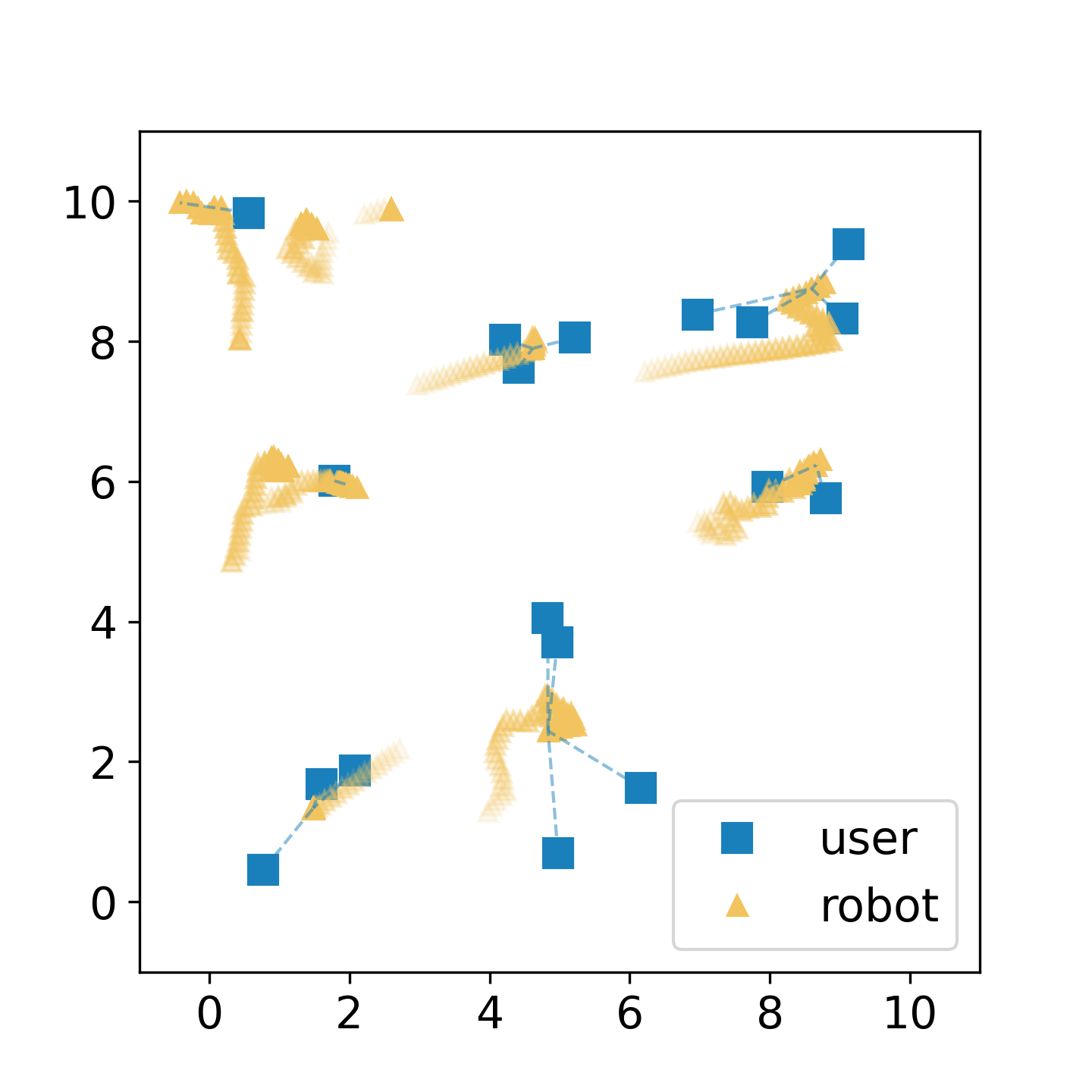}
        \caption{Type 2 allocation.} 
        \label{fig:type.2}
    \end{subfigure}
    \captionsetup[subfigure]{justification=centering}
    \begin{subfigure}[b]{0.19\textwidth}
        \centering
        \includegraphics[height=2.5cm]{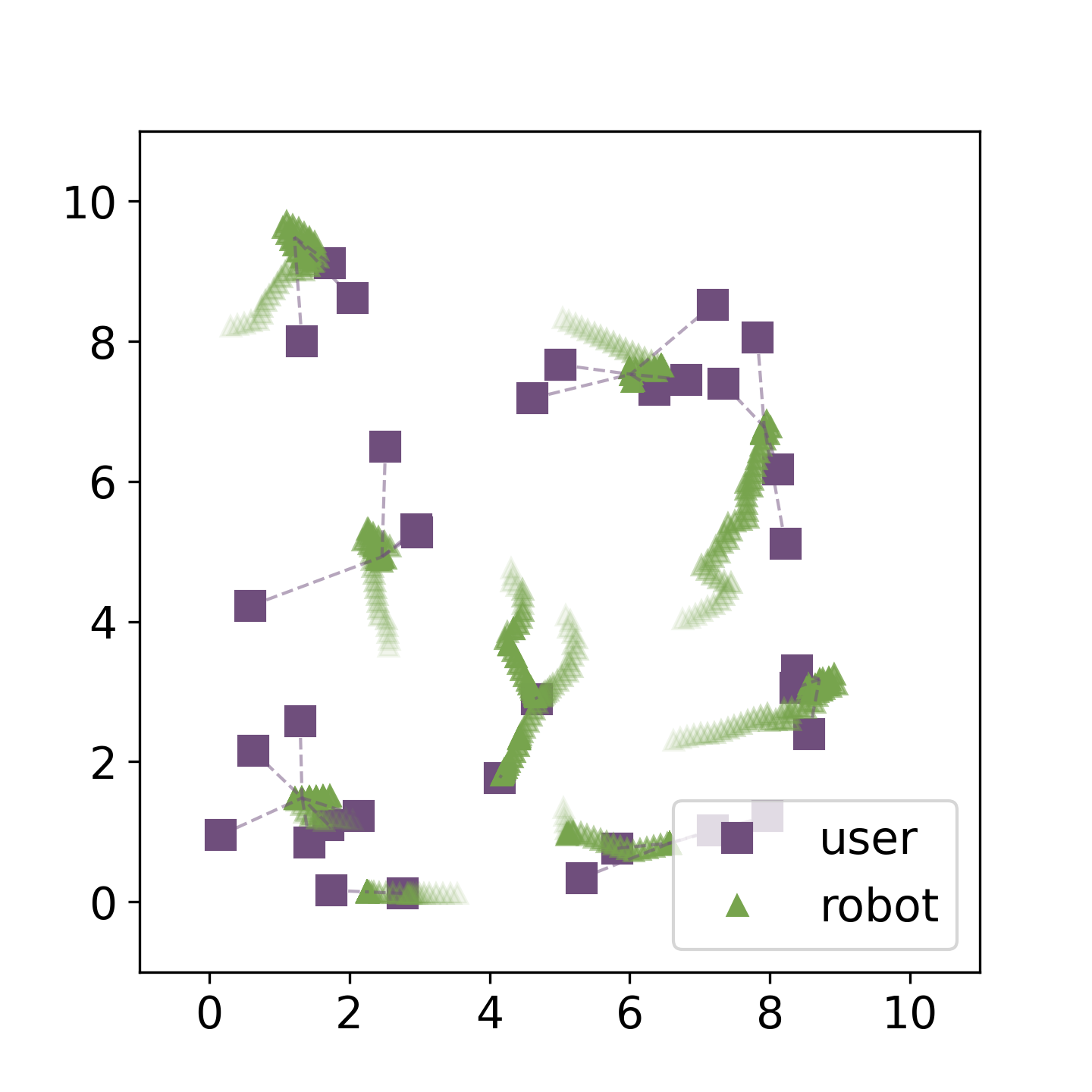}
        \caption{Type 3 allocation.} 
        \label{fig:type.3}
    \end{subfigure}
    \captionsetup[subfigure]{justification=centering}
    \begin{subfigure}[b]{0.19\textwidth}
        \centering
        \includegraphics[height=2.5cm]{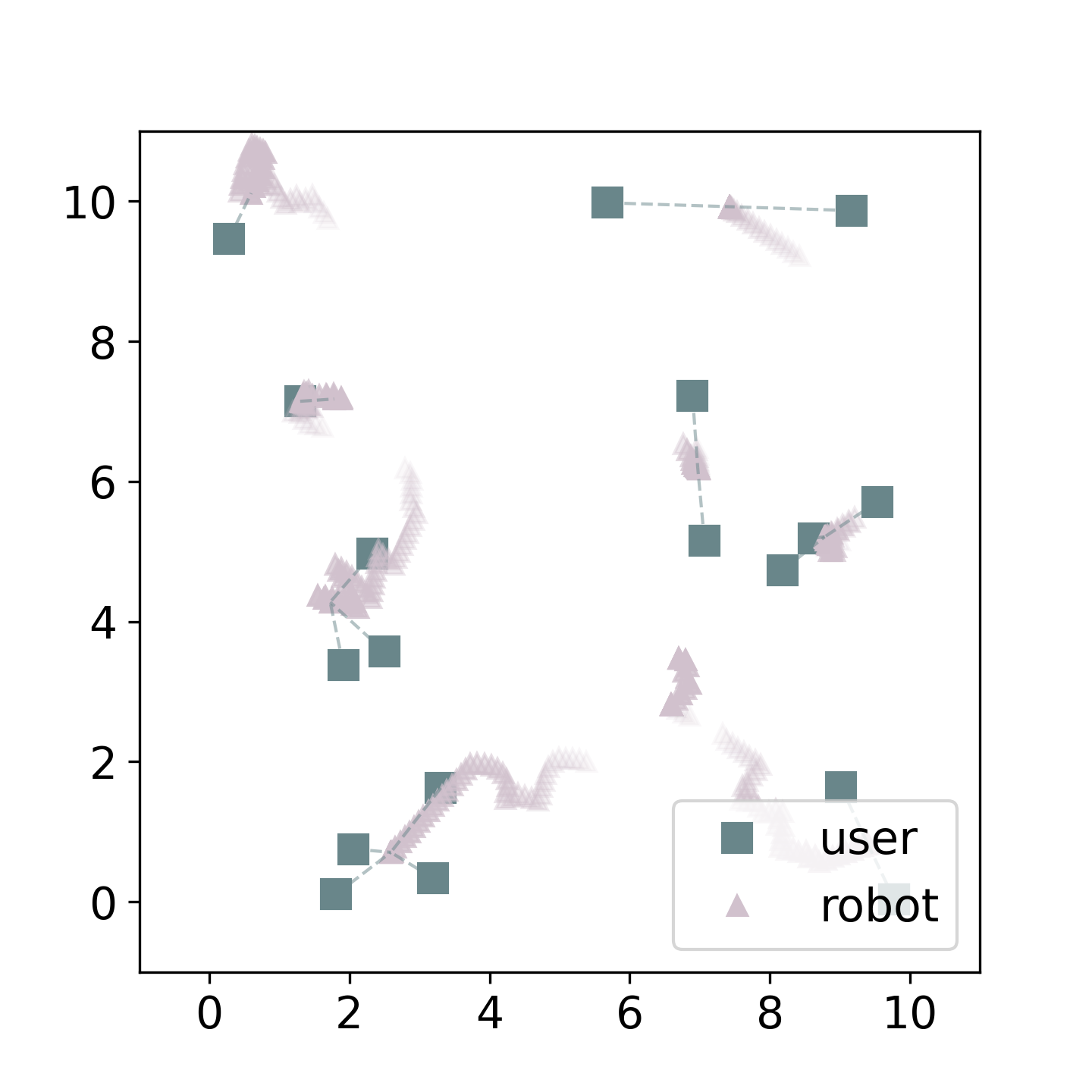}
        \caption{Type 4 allocation.} 
        \label{fig:type.4}
    \end{subfigure}
    \captionsetup[subfigure]{justification=centering}
    \begin{subfigure}[b]{0.19\textwidth}
        \centering
        \includegraphics[height=2.5cm]{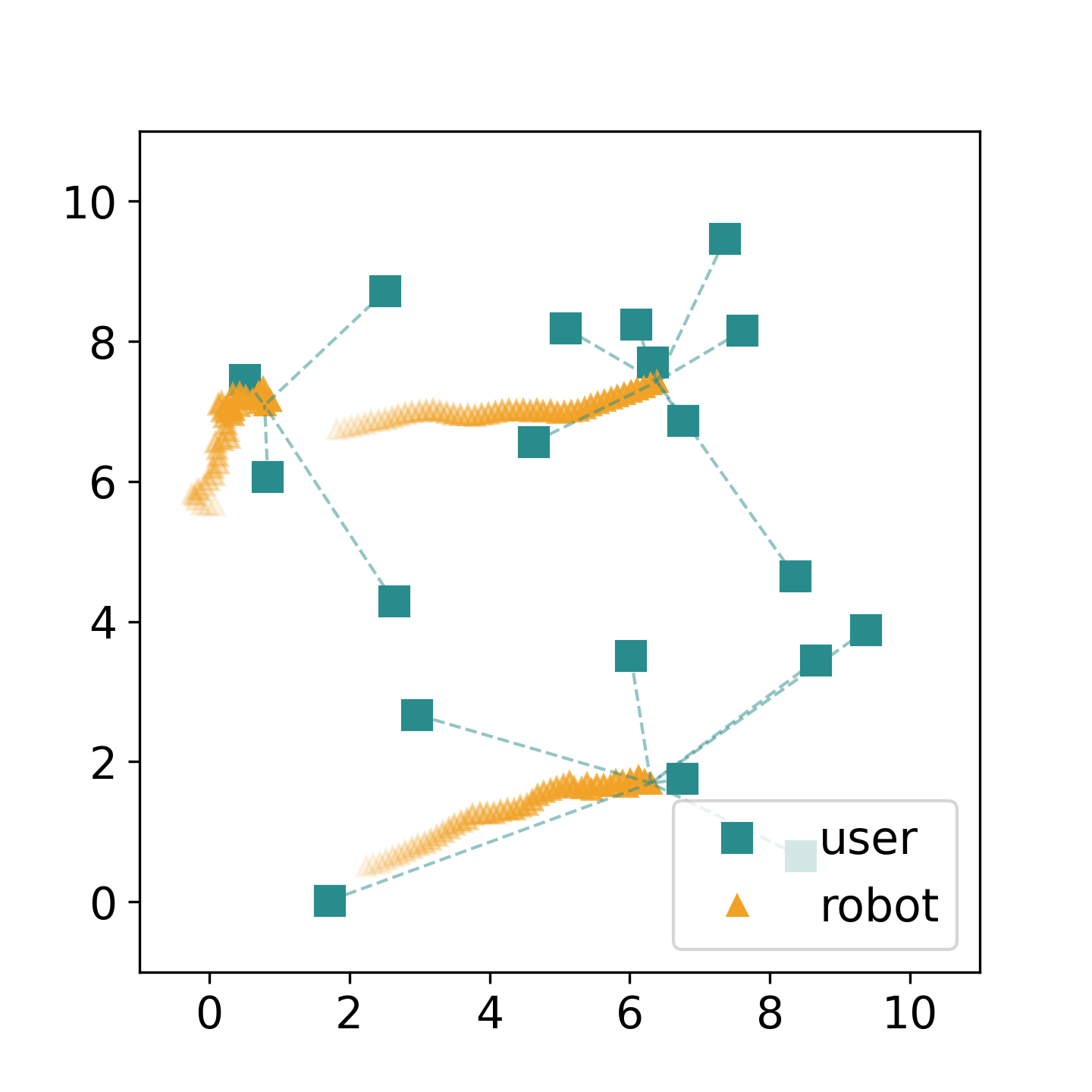}
        \caption{Type 5 allocation.} 
        \label{fig:type.5}
    \end{subfigure}
    \caption{The allocation trajectories of the robot with different service types in Scenario 8 (five service types and 100 users). After identifying the user type with the optimal payment, all service robots with different types successfully go to users using our distributed allocation algorithm. Some zigzag trajectories are mainly due to collision avoidance. The dashed lines indicate the user assignment after the allocation.}
    \label{fig:alloc_trajectory}
\end{figure*}

\subsection{Comparison with Other Uncertainty Reduction} \label{sec:experiment.compare}

Our contract-based approach effectively reduces the uncertainty of the user's true service type and hence achieves higher allocation efficiency compared with the robust allocation \eqref{eq:contract.alloc}. Here, we compare our method with two non-contract-based uncertainty reduction methods: \emph{random-max matching} and \emph{random-sample matching}. 
Given the user's type distribution $p_i \in \Delta(\mK), i \in \mM$, random-max matching determines the user $i$'s type by choosing the one with the largest probability mass; random-sample matching samples the user $i$'s type in $\mK$ based on the distribution $p_i$. Both uncertainty reduction methods reduce the type distribution to deterministic types. Then, the SP uses these deterministic types to perform robot allocation.
We use the same simulation settings as in Sec.~\ref{sec:experiment.contract} to test these methods alongside the robust allocation and summarize the terminal locational energies in Tab~\ref{tab:comparison}. Different locational energies are obtained with the user types generated by the corresponding method.

\begin{table}[]
    \centering
    \begin{tabular}{c|>{\centering\arraybackslash}p{2.5cm}|>{\centering\arraybackslash}p{2.5cm}|>{\centering\arraybackslash}p{2.5cm}} \hline
         Scenario & robust & max & samp  \\ \hline
         1 & 119.35(20.75) & 94.21(28.33) & 88.54(30.12) \\ 
         2 & 99.98(21.16) & 75.06(24.11) & 66.52(26.44) \\ \hline
         3 & 160.53(27.21) & 125.95(39.68) & 134.61(32.88) \\ 
         4 & 142.44(17.87) & 104.20(28.06) & 100.25(26.33) \\ \hline
         5 & 236.59(24.92) & 200.31(38.45) & 192.99(28.76) \\ 
         6 & 86.67(9.24) & 60.04(13.76) & 57.67(12.71) \\ \hline
         7 & 179.52(13.87) & 140.41(20.37) & 137.25(20.71) \\ 
         8 & 266.99(18.34) & 208.31(36.55) & 204.81(32.28) \\ \hline
    \end{tabular}
    \vspace{2mm}
    \caption{Final locational energies of robust allocation (robust), random-max matching (max), and random-sample matching (samp). The mean and deviation are obtained per 50 simulations using the same settings as Tab.~\ref{tab:result}.}
    \label{tab:comparison}
\end{table}

The robust allocation shows the largest locational energy since it considers average cases for allocation. The two comparing methods show lower locational energies due to uncertainty reduction. 
However, despite the higher efficiency, the uncertainty reduction approach can cause incorrect estimations of the user's true type and assign mismatched robots to users. A mismatch occurs when a lower-type robot is assigned to serve a user with a higher service-type request. Note that a higher-type robot can fulfill the lower-type user's demand despite resource waste. We summarize the mismatches of the two methods in Tab.~\ref{tab:mismatch}. 
As the number of users increases, more mismatches occur, resulting in service provision failures. The SP sacrifices service provision accuracy although she thinks she has improved allocation efficiency.
We also observe that, on average, choosing the user type with the maximum probability mass appears to be a safer method to eliminate uncertainty than randomly sampling the type due to smaller mismatches and variations. 

Compared with our approach in Tab.~\ref{tab:result}, the two comparing methods show higher locational energies on average. This is because the SP may assign a robot to serve more users based on incorrect user type information. The robot then needs more routing time to visit all assigned users, leading to higher locational energy and less efficient allocation. 

We note that the comparison between mean-variance values of Tab.~\ref{tab:result} and Tab.~\ref{tab:comparison} cannot reflect case-by-case performance, making the comparison less straightforward. Therefore, we compute the locational energy difference in each simulation and show the mean-variance per 50 simulations for all scenarios in Fig.~\ref{fig:err}. The energy difference is obtained by subtracting our result from the locational energy generated by other methods.
It is expected that robust allocation always produces the largest difference. On average, our method produces more efficient allocations than the other two uncertainty reduction methods shown by the positive mean value. This indicates that the accuracy of uncertainty reduction also has an impact on allocation efficiency. We note some negative deviations (e.g., in Scenario 5), which means that the comparing methods have lower locational energy in some simulation cases. However, this only implies that the robots, in these cases, happen to provide an efficient allocation corresponding to the reduced user type in the price of mismatches.

\begin{table}[]
    \centering
    \begin{tabular}{c|c|c|c|c} \hline
        Scenario & 1 & 2 & 3 & 4  \\ \hline
        max & 5.02(1.61) & 5.02(1.61) & 5.44(1.86) & 7.52(2.29) \\ 
        samp & 5.50(1.92) & 5.52(1.68) & 6.90(1.75) & 9.12(2.44) \\ \hline 
        Scenario & 5 & 6 & 7 & 8 \\ \hline
        max & 13.12(2.92) & 13.12(2.92) & 23.98(3.53) & 27.14(3.45) \\ 
        samp & 14.48(2.90) & 14.60(2.93) & 28.00(3.76) & 31.04(4.26) \\ \hline
    \end{tabular}
    \vspace{2mm}
    \caption{Mismatches in user assignment for different scenarios. A mismatch occurs when a low-type robot is assigned to a user with a high-type service request.}
    \label{tab:mismatch}
\end{table}

\begin{figure}
    \captionsetup[subfigure]{justification=centering}
    \begin{subfigure}[b]{0.48\textwidth}
        \centering
        \includegraphics[height=4cm]{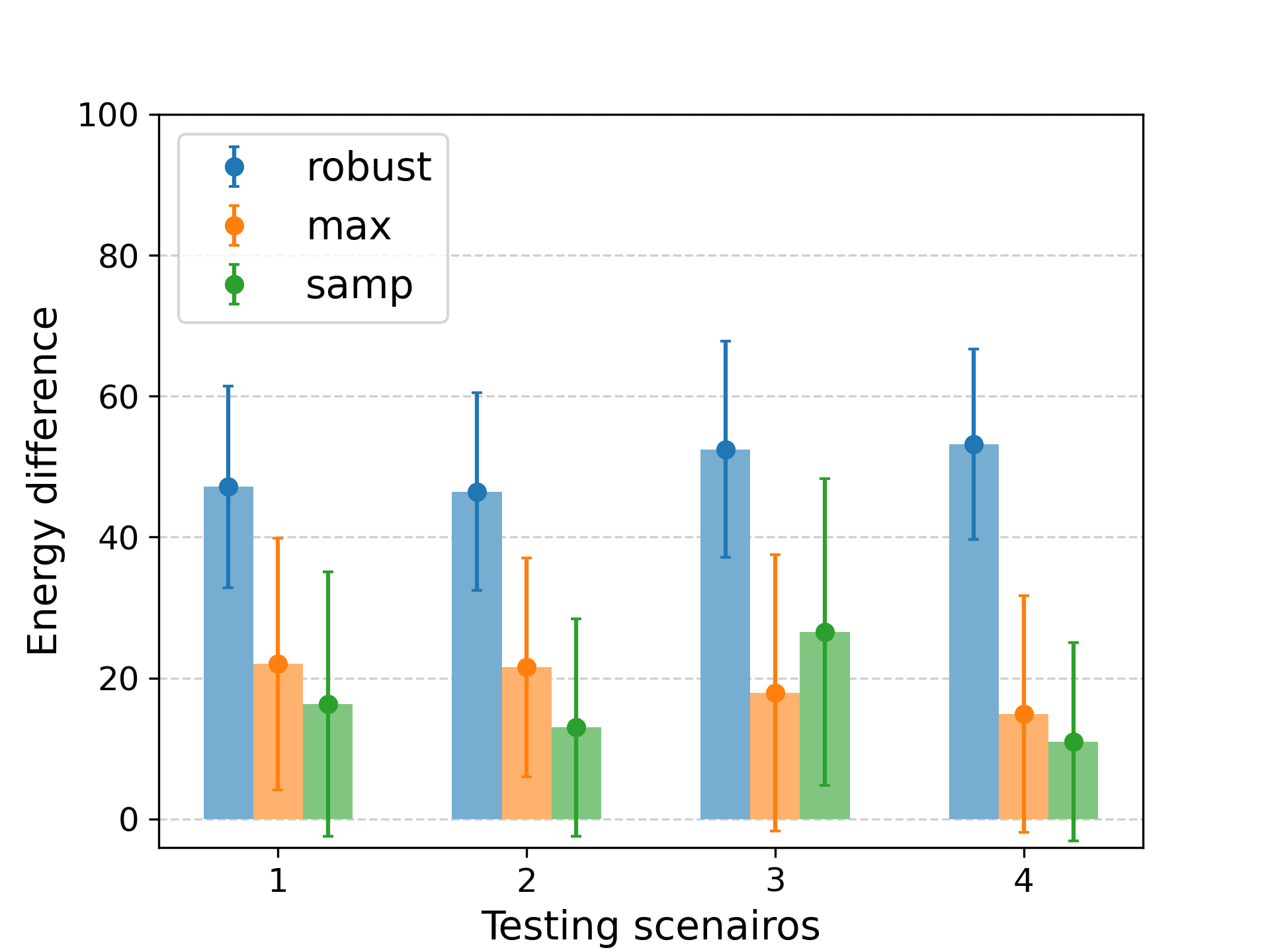}
        \caption{Scenario 1-4.} 
        \label{fig:err.s1_4}
    \end{subfigure}
    \captionsetup[subfigure]{justification=centering}
    \begin{subfigure}[b]{0.48\textwidth}
        \centering
        \includegraphics[height=4cm]{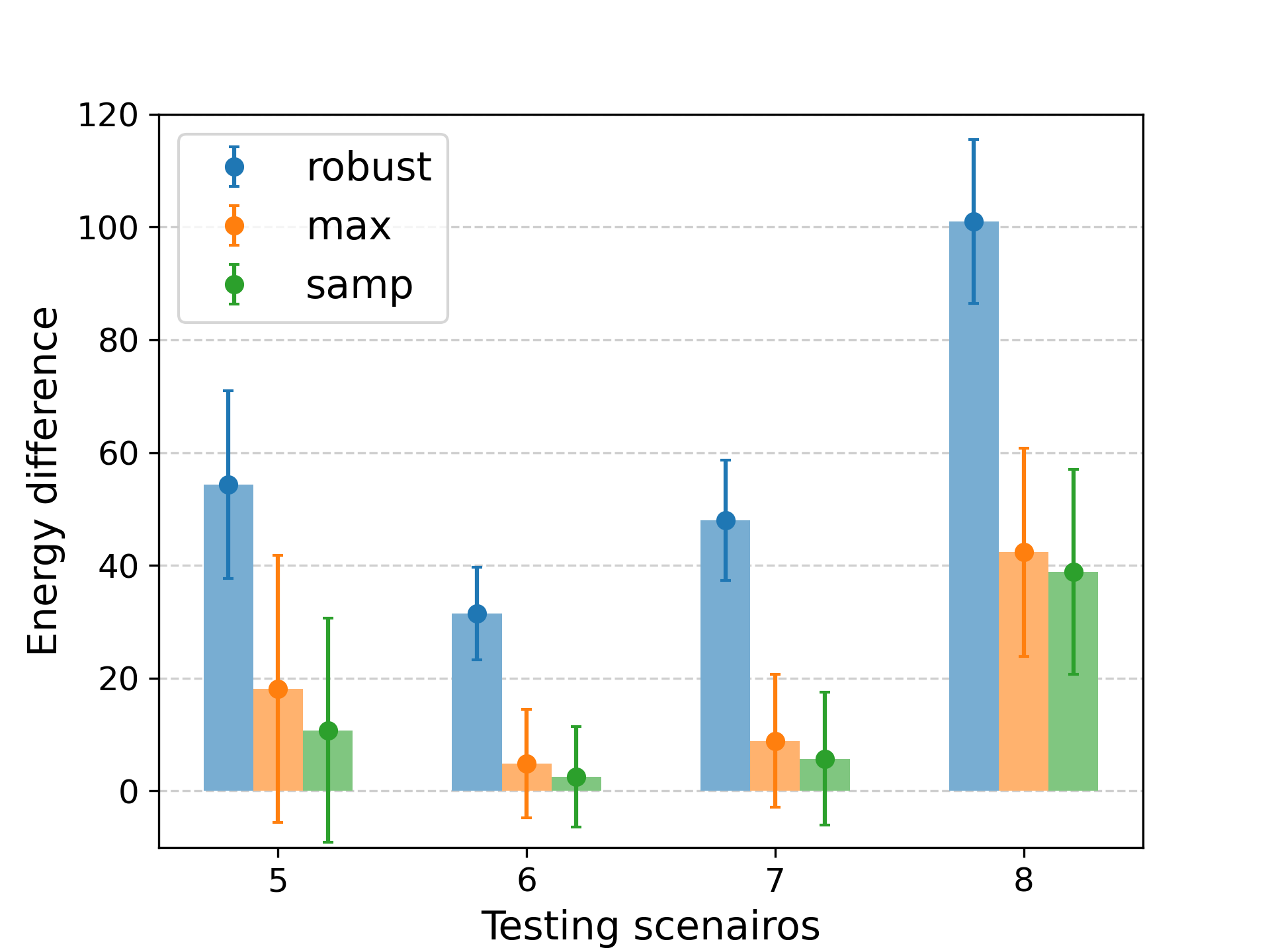}
        \caption{Scenario 5-8.} 
        \label{fig:err.s5_8}
    \end{subfigure}
    \caption{Case-by-case locational energy differences per 50 simulations for different scenarios. The energy difference is obtained by subtracting our result from comparing methods.}
    \label{fig:err}
\end{figure}

\section{Conclusions} \label{sec:conclusion}
In this work, we have introduced a contract-based approach to allocating service robots to different users under uncertainties regarding user service requirements. The incentive-compatible design principle ensures that the user truthfully selects the real service demands among different service options and optimizes resource utilization. The developed distributed allocation algorithm effectively guides all robots to the target location for service provision by following collision-free trajectories. Case studies have corroborated the effectiveness of our approach and demonstrated that our method achieves better allocation efficiency and accuracy compared with the robust allocation paradigm and two uncertainty reduction methods. 

As we observe, analytical contract-based solutions require explicit user models. In practice, we may need extra efforts to model users, including using data-driven methods, which is one of our future directions. We would also extend our approach to a dynamic allocation framework for the sequential arrival of user demands in uncertain environments.

\appendix
\section{Proof of Proposition \ref{prop:1}} \label{app:1}

Given any allocation variable $\tuple{\bm{b}, \bm{x}}$, the allocation constraints \eqref{eq:alloc} indicate that each type $k$ user is only assigned to one robot. Therefore, the constraints \eqref{eq:ir}-\eqref{eq:ic2} are only meaningful for nonempty allocation with $b^k_{ij} = 1$. For each user $i$, WLOG, we omit the subscript $ij$ and write $\rho^k, k \in \mK$, as the payment variable corresponding to the nonempty allocation $b^k_{ij} = 1$.
Then, we simplify \eqref{eq:ir}-\eqref{eq:payment} by
\begin{equation} 
\label{eq:ir.pf.1}
    0 \leq \rho^k \leq r, \quad k \in \mK,
\end{equation}
\begin{equation}
\label{eq:ic1.pf}
    \rho^k - \rho^l \leq r, \ \forall l = 1,\dots k-1, k \in \mK \backslash \{1\},
\end{equation}
\begin{equation}
\label{eq:ic2.pf}
    \rho^l - \rho^k \geq g(l-k)r - r, \ \forall l = k+1,\dots, K, k \in \mK \backslash \{K\}.
\end{equation}
Note that the constraints \eqref{eq:ir.pf.1} always imply the constraint \eqref{eq:ic1.pf}. Thus, we only need to focus on \eqref{eq:ir.pf.1} and \eqref{eq:ic2.pf}.

We observe that the objective in \eqref{eq:payment} given $\bm{b}$ is linear in $\bm{\rho}$ and each coefficient $p^k_i$ is positive. It means that each $\rho^k, k \in \mK$, needs to be maximized.
Since $\rho^K$ has the least couplings in \eqref{eq:ic2.pf}, we set $\rho^{K*} = r$ to achieve the maximum payment for the type $K$ service. Then, we simplify \eqref{eq:ic2.pf} with $\rho^{K*}$ and obtain
\begin{equation}
\label{eq:ir.pf.2}
    \rho^k \leq 2r - g(K-k) r, \quad k \in \mK \backslash \{K\}.
\end{equation}
It is clear that $2r - g(K-k) r < r$. Therefore, \eqref{eq:ir.pf.2} provides a new upper bound for \eqref{eq:ir.pf.1}. Next, we set $\rho^{K-1*} = 2r - g(1)r$ as the upper bound to maximize the payment for the type $K-1$ service. We keep updating \eqref{eq:ic2.pf} with $\rho^{K-1*}$ and obtain
\begin{equation}
\label{eq:ir.pf.3}
    \rho^{k} \leq 3r - g(1) r - g(K-k-1) r, \quad k \in \mK \backslash \{K,K-1\}.
\end{equation}
Now we have two upper bounds \eqref{eq:ir.pf.1} and \eqref{eq:ir.pf.2} on $\rho^k, k \in \mK \backslash \{K,K-1\}$. Using the concavity of $g$, we can show that $3r - g(1)r - g(K-k-1) r < 2r- g(K-k)r $ for all $ k \in \mK \backslash \{K,K-1\}$. Therefore, \eqref{eq:ir.pf.3} provides a new upper bound for $\rho^k$ and we can set $\rho^{K-2*} = 3r - 2g(1)r$ and ignore \eqref{eq:ir.pf.2}. 

We update \eqref{eq:ic2.pf} with $\rho^{K-2*}$ and repeat the analysis. By induction, we can obtain
\begin{equation*}
    \rho^{k*} = (K-k+1)r - (K-k)g(1)r, \quad k \in \mK.
\end{equation*}
Since $g(1) < \frac{K}{K-1}$, we can verify that $\rho^{k*} \geq 0, k \in \mK$, and thus feasible. 
The analysis holds for arbitrary $ij$ pair once the allocation variable is specified. 

For the empty allocation $b^k_{ij} = 0$, the constraints \eqref{eq:ir}-\eqref{eq:ic2} trivially holds and the corresponding $\rho^k_{ij}$ becomes free variables. we set these $\rho_{ij} = 0$ since there is no assignment and it is meaningless to have any positive values.

\section{Proof of Proposition \ref{prop:2}} \label{app:2}

For simplicity, we ignore the superscript type $k$ since the proof applies to all types of users and robots. When fixing the robot position $\bm{x}$, the optimal user assignment is given by the Voronoi partition $V = \{V_1,\dots, V_{N}\}$ based on $f$, where
\begin{equation*}
    V_j = \{ q_i, i \in \mM : f\left( \norm{q_i - x_j} \right) \leq f\left( \norm{q_i - x_l}\right), \forall l \in \mN \}.
\end{equation*}
The optimal assignment variable can be derived by $b^*_{ij} = 1$ if $q_i \in V_j$ and $0$ otherwise, $\forall i \in \mM$.
The locational energy under $\bm{b}^*$ can be written as
\begin{equation*}
    L_b(\bm{x}) := L(\bm{x}, \bm{b}^*(\bm{x})) = \sum_{i=1}^M \min_{j = 1,\dots, N} f\left( \norm{q_i - x_j} \right).    
\end{equation*}
For any assignment $\bm{b}$ satisfying the allocation constraint \eqref{eq:alloc}, it is clear that $L_b(\bm{x}) \leq L(\bm{x}, \bm{b})$. From \cite{du1999centroidal}, we have
\begin{equation*}
    \frac{\partial L_b(\bm{x})}{\partial x_j} = \sum_{q_l \in V_j} \frac{\partial f \left( \norm{q_l - x_j} \right)}{\partial x_j}, \quad \forall j \in \mN.
\end{equation*}
After receiving $\bm{b}^*$ from the SP, the robot control is in fact the gradient of $L_b(\bm{x})$, i.e., $u \sim - \frac{\partial L_b}{\partial x_j}$.
Thus, we have 
\begin{equation*}
    x_{j,t+1} = x_{j,t} - \alpha \frac{\partial L_b}{\partial x_j} \bigg\vert_{x=x_{j,t}}, \quad \forall j \in \mN.
\end{equation*}
With a proper step size, we have $L(\bm{x}_{t+1}, \bm{b}^*(\bm{x}_t)) \leq L_b(\bm{x}_t)$. Also, since $\bm{b}^*(\bm{x}_t)$ is not the optimal assignment given $\bm{x}_{t+1}$, we have $L_b( \bm{x}) \leq L(\bm{x}_{t+1}, \bm{b}^*(\bm{x}_t))$, which implies $L_b(\bm{x}_{t+1}) \leq L_b (\bm{x}_t)$. The algorithm will converge to the centroids of some Voronoi partitions $V^*$ where $x^*_j = \frac{1}{\abs{V_j^*}} \sum_{q_l \in V^*_j} q_l$, which yields a zero gradient. The resulting $\bm{x}^*$ and $V^*$ correspond to a local minimizer of $L(\bm{x}, \bm{b})$.

%
\bibliographystyle{splncs04}
\bibliography{mybib}
\end{document}